%% file: main.tex
\documentclass{article}

\usepackage{microtype}
\usepackage{graphicx}
\usepackage{subcaption}
\usepackage{booktabs} 

\usepackage{hyperref}

\usepackage[preprint]{icml2026}

\usepackage{amsmath}
\usepackage{amssymb}
\usepackage{mathtools}
\usepackage{amsthm}
\usepackage{thmtools}

\allowdisplaybreaks
\usepackage[capitalize,noabbrev]{cleveref}

\theoremstyle{plain}
\newtheorem{theorem}{Theorem}[section]

\newtheorem{lemma}[theorem]{Lemma}

\theoremstyle{definition}
\newtheorem{definition}[theorem]{Definition}

\theoremstyle{remark}

\usepackage[textsize=tiny]{todonotes}

\begin{document}

\twocolumn[
  \icmltitle{Networked Information Aggregation 
    for Binary Classification}  




  \begin{icmlauthorlist}
    \icmlauthor{MohammadHossein Bateni}{google}
    \icmlauthor{Zahra Hadizadeh}{uci}
    \icmlauthor{MohammadTaghi Hajiaghayi}{umd}
    \icmlauthor{Mahdi JafariRaviz}{umd}
    \icmlauthor{Shayan Taherijam}{uci}
  \end{icmlauthorlist}

  \icmlaffiliation{umd}{University of Maryland, College Park, MD, USA}
  \icmlaffiliation{google}{Google Research, New York City, NY, USA}
  \icmlaffiliation{uci}{University of California, Irvine, CA, USA}

  \icmlcorrespondingauthor{Shayan Taherijam}{staherij@uci.edu}

  \icmlkeywords{Machine Learning, ICML}

  \vskip 0.3in
]



\printAffiliationsAndNotice{}  
\input{Abstract}
\input{Introduction}
\input{Preliminaries}
\input{Classification_VS_Regression}
\input{Lower_bound}

\section*{Impact Statement}
This paper presents work whose goal is to advance the field of Machine
Learning. There are many potential societal consequences of our work, none of
which we feel must be specifically highlighted here.

\bibliography{ref}
\bibliographystyle{icml2026}

\newpage
\appendix
\onecolumn
\input{Appendix.tex}

\end{document}

%% file: Abstract.tex
\begin{abstract}
We study networked binary classification on a directed acyclic graph (DAG) where each agent observes only a subset of the feature columns of a shared dataset. Agents act sequentially along the DAG: each receives prediction columns from its parents (if any), augments its local features with these columns, fits a logistic predictor by minimizing binary cross-entropy (BCE), and forwards its prediction column to its outgoing neighbors. We ask whether this sequential distributed training procedure achieves \emph{information aggregation}, meaning that some agent attains small excess loss compared to the best logistic predictor trained with access to all feature columns.

This question was studied for linear regression under squared loss by Kearns, Roth, and Ryu~\cite{kearns2026networked}. Extending their guarantees to classification is nontrivial because their analysis relies on quadratic structure that does not directly transfer to BCE with a logistic link. We analyze the resulting sequential logit-passing protocol and prove: (i) an excess loss upper bound of $O(M/\sqrt{D})$ on depth-$D$ paths under the condition that every $M$ contiguous subsequence of $M$ agents collectively observe all features, and (ii) a close lower bound showing instances with excess loss of at least $\Omega(k/D)$ where $k$ is the dimension of the feature space. Together, these results identify network depth as a fundamental bottleneck for information aggregation in networked logistic regression.
\end{abstract}

%% file: Introduction.tex
\section{Introduction}

The study of \textit{social learning in networks} addresses a fundamental question in distributed learning:
How do agents with partial and heterogeneous information aggregate their observations to form an accurate global belief?
This line of inquiry has a rich literature in economics and network science, commencing with the seminal work of DeGroot~\cite{degroot1974reaching}.
The DeGroot model conceptualizes learning as an iterative process of weighted averaging, where agents update their scalar beliefs based on the beliefs of their neighbors.
This heuristic approach was subsequently refined by Bayesian models of observational learning and information cascades, where rational agents infer private signals from the actions of their predecessors to reach a consensus or truth.
For example, Banerjee~\cite{banerjee1992simple} studies herding behavior in sequential decision-making; Bikhchandani et al.~\cite{bikhchandani1992theory} formalize informational cascades as a mechanism for fads and fashions; Gale and Kariv~\cite{galekariv2003bayesian} analyze Bayesian learning dynamics over social networks; and Golub and Jackson~\cite{golub2010naive} study conditions under which naive averaging aggregates information.

While classical social learning focuses on the aggregation of scalar estimates for a single state variable, modern applications increasingly demand \textit{networked machine learning}, where agents collaboratively learn high-dimensional hypothesis functions.
In this setting, aggregation entails reconstructing a complex predictive relationship---such as a classifier mapping a high-dimensional feature vector to a label---dispersed across a network.
Recently, Kearns et al.~\cite{kearns2026networked} introduced a formal framework for this problem, embedding learning agents in a Directed Acyclic Graph (DAG).
In their protocol, agents observe a local subset of features and the predictions of their parents, training a model to minimize a local loss function.
For linear regression under squared error, they demonstrated that such a process allows agents to achieve excess loss competitive with a global learner having access to all features, with the network depth acting as the critical resource for aggregation.

\subsection{Our Contributions}
In this work, we focus on the classification variant of the model of~\cite{kearns2026networked}.
We analyze a sequential learning protocol where agents optimize logistic regression models using local features and incoming logits from their parents in a DAG.
Our main contributions are as follows.

\textbf{Upper Bounds.}
We analyze information aggregation in a network of logistic regression agents.
We show that if a path of length $D$ satisfies a coverage condition---namely, every contiguous block of $M$ agents collectively observes all features.
In particular, we prove that the excess loss of the final agent scales as $O(M/\sqrt{D})$ (Theorem~\ref{thm:convergence}).

At a high level, our proof extends the analysis of~\cite{kearns2026networked} from squared loss to logistic loss.
The squared-loss argument relies on an exact variance decomposition, which does not carry over to Binary Cross-Entropy (BCE).
Instead, we certify progress via a KL/Bregman-type characterization of loss differences (Lemma~\ref{lem:pythagorean}) together with a Pinsker-style link from KL progress to prediction error (Lemma~\ref{lem:kl-mse}).
A key technical input is an orthogonality condition for BCE residuals (Lemma~\ref{lem:orthogonality}).
Finally, a stability (pigeonhole) argument identifies a segment of the path where improvement saturates; this forces small residuals and yields a bound on the deviation from the global optimum.

\textbf{Lower Bounds and Hard Instances}
We complement the upper bound by exhibiting a hard instance for the sequential logit-passing protocol.
On this instance, early features are uninformative about the label in isolation and only become useful after sufficiently many passes through the feature cycle.
We prove that the excess loss is lower-bounded by $\Omega(k/D)$ where $k$ is the dimension of the feature space, showing that network depth is not merely sufficient but necessary in the framework.

\subsection{From Regression to Classification}
Extending theoretical guarantees of networked aggregation from linear regression to binary classification is non-trivial.
The analysis in~\cite{kearns2026networked} relies fundamentally on the geometry of squared loss---in particular, orthogonality of residuals and a Pythagorean variance decomposition---which translate loss reduction directly into parameter-space convergence.
In contrast, binary classification via logistic regression and Binary Cross-Entropy (BCE) does not admit a comparable bias--variance decomposition.
The non-linearity of the sigmoid link introduces genuine geometric complications; in particular, linear aggregation in probability space is not equivalent to linear aggregation in feature space, motivating the architectural choice of passing logits rather than probabilities.

Despite these challenges, classification remains the primary modality for distributed applications ranging from medical diagnosis~\cite{vepakomma2018split} to decentralized fraud detection.
Establishing guarantees for information aggregation under BCE is therefore a central theoretical goal.

More broadly, this regression-to-classification gap is not unique to our networked setting: across several areas, techniques and guarantees that are clean for \emph{squared-loss regression} require genuinely different tools once the target is \emph{probabilistic classification}.
We list three representative pairs.
First, in \emph{sketching/subspace-embedding} methods, least-squares regression admits sharp oblivious sketching guarantees~\cite{clarkson2013lowrank}, whereas logistic objectives require non-$L_2$ progress measures and different proof techniques~\cite{munteanu2021oblivious}.
Second, in \emph{second-order / curvature-aware} acceleration via sketching, iterative Hessian sketch methods are cleanly analyzed for constrained least squares~\cite{pilanci2016ihs}, while Newton-sketch extensions for regularized ERM objectives (including logistic regression) require controlling curvature that depends on current predictions~\cite{pilanci2017newton}.
Third, in \emph{distribution-free predictive inference}, split conformal prediction yields regression intervals~\cite{lei2018conformal}, whereas classification requires set-valued prediction regions and different uncertainty objects~\cite{angelopoulos2021gentle}.
Taken together, these examples support the message most relevant to our paper: moving from regression to classification typically replaces Euclidean residual decompositions with KL/Bregman-flavored notions of progress, which becomes unavoidable when the information flowing through the network is itself a low-bandwidth probabilistic prediction.

\subsection{Related Work}
The most directly related body of work comes from \emph{Vertical Federated Learning (VFL)}, where different parties hold disjoint feature columns for the same aligned examples and collaborate to train a joint predictor.
The standard VFL setting is inherently \emph{interactive}: the survey of~\cite{yang2019federated} formalizes the feature-partitioned regime and highlights that most practical protocols rely on repeated message exchanges (e.g., gradients, activations, or protected sufficient statistics) to optimize a shared objective.
This view is reflected in deployed platforms such as FATE~\cite{liu2021fate}, which operationalize multi-round VFL pipelines and make explicit the practical tension between accuracy, privacy protection, and communication cost.
A particularly relevant algorithmic family is \emph{vertical} gradient-boosted tree training: SecureBoost~\cite{cheng2021secureboost} and later high-performance variants such as SecureBoost+~\cite{fan2024secureboostplus} show that strong predictors can be trained over vertically split features, but only by repeatedly coordinating split decisions through exchanging split-related information.
From the perspective of our setting, these works provide concrete evidence that \emph{feature partitioning} alone already induces a strong communication bottleneck; our work asks what can be guaranteed when the interaction budget is pushed much closer to its limit.
A complementary architectural line is \emph{split learning}~\cite{vepakomma2018split}, which avoids sharing raw features by cutting a neural network across parties and communicating intermediate activations/gradients.
While the mechanism is different from VFL-by-gradients or VFL-by-trees, it again emphasizes the same friction point that is central to our paper: learning can succeed under information-flow constraints, but the protocol must carefully manage what is transmitted and how many rounds are available.
Finally, recent surveys~\cite{ye2025vflreview, ye2025vflstructured} synthesize these lines and stress that communication (both message size and number of rounds) remains a dominant practical limitation in VFL, alongside privacy leakage and the statistical dependence patterns induced by feature/label partitioning; this framing closely matches the communication-centric viewpoint taken in our analysis.

%% file: Preliminaries.tex
\section{Preliminaries}
\label{sec:prelim}

In this section, we formally define the notation, the logistic regression framework, and the distributed learning setup.

\subsection{Binary Classification and Logistic Regression}
We consider a binary classification problem. Let $\mathcal{D}$ be a distribution over feature-label pairs $(x, y)$, where $x \in \mathbb{R}^d$ is a vector of features and $y \in \{0, 1\}$ is the binary target label.

We model the conditional probability $P(y=1|x)$ using the logistic function $\sigma(z) = \frac{1}{1 + e^{-z}}$. A hypothesis is parameterized by a vector $\theta \in \mathbb{R}^d$, yielding the predictor:
\begin{equation}
    p^{(\theta)}(x) = \sigma(\theta^T x).
\end{equation}
The quality of a predictor is measured by the expected Binary Cross Entropy (BCE) loss:
\begin{equation}
    L(\theta) = -\mathbb{E}_{(x,y) \sim \mathcal{D}} [y \log p^{(\theta)}(x) + (1-y) \log(1 - p^{(\theta)}(x))].
\end{equation}
The global Maximum Likelihood Estimator (MLE), denoted by $p^*$, corresponds to the parameters $\theta^*$ that minimize this loss over the full feature space.

\subsection{Distributed Learning Setup}
We consider a distributed learning setting with a set of $N$ agents, $\mathcal{A} = \{A_1, \dots, A_N\}$. The agents are organized in a Directed Acyclic Graph (DAG), $G = (\mathcal{A}, E)$, where an edge $(A_j, A_i) \in E$ indicates that agent $A_i$ receives information from agent $A_j$. We sometimes also write $A_j \to A_i$ to denote this relationship. We denote the set of parents of agent $A_i$ as $\mathrm{Pa}(A_i) = \{A_j \mid (A_j, A_i) \in E\}$. The agents learn in an order consistent with a topological sort of the DAG, with ties in the topological ordering broken arbitrarily.

Let $[d] = \{1, 2, \dots, d\}$ be the set of indices for $d$ total features. Each agent $A_i \in \mathcal{A}$ is associated with a specific subset of these features, $S_i \subseteq [d]$. For any agent $A_i$, its local view of the features is $x_{S_i}$, which is the sub-vector of $x$ corresponding to the features indexed by $S_i$. The agent also receives its parents' logits.

\subsection{Sequential Learning Protocol}
The agents learn models in a sequential manner. Unlike linear regression settings where agents minimize squared error, here each agent $A_i$ aims to train a model $f_i$ to minimize the local Binary Cross Entropy (BCE) loss.

Each agent $A_i$ observes its local features $x_{S_i}$ and the set of outputs from its parents. To preserve the information geometry of the exponential family, agents communicate their logits (the input to the sigmoid function) rather than their final probabilities. Let $z_j$ be the logit output by parent $A_j$, such that the parent's prediction is $\hat{p}_j = \sigma(z_j)$.

The model $p_i$ for agent $A_i$ is a logistic function of its local features and the parents' logits:
\begin{equation}
    p_i(x) = \sigma(z_i(x)) \; \text{where} \; z_i(x) = w_i^T x_{S_i} + \sum_{j \in \mathrm{Pa}(A_i)} v_{ij} z_j(x).
\end{equation}
Here, $w_i$ (weights for local features) and $v_{ij}$ (weights for incoming logits) are learnable parameters. Agent $A_i$ chooses these parameters to minimize the expected BCE loss $\mathbb{E}[L(p_i)]$.

The final output of the system is the prediction of the sink agent (or the last agent in the topological sort).

We use the notation $L(p)$, $L(\theta)$, and $L(z)$ interchangeably. Here, $\theta$ denotes the weight vector. We define $z(x) = \theta^T x$ and $p(x) = \sigma(z(x))$.

%% file: Classification_VS_Regression.tex
\section{Upper Bounds}

This section analyzes information aggregation within a network of logistic regression agents. We demonstrate that sequentially minimizing Binary Cross Entropy (BCE) allows the network to approximate the global predictor derived from all features, assuming sufficient network depth and feature coverage.

We first establish that the residuals of the BCE loss optimizer are orthogonal to the input. A similar result was previously derived by \cite{kearns2026networked} for the linear regression with MSE loss.

\begin{lemma}[Orthogonality of Residuals] \label{lem:orthogonality}
Let $p^*$ be the optimal logistic predictor on a feature space $\mathcal{X}$. The residual error $(p^*(x) - y)$ is orthogonal to the feature vector $x$ in expectation:
\begin{equation*}
    \mathbb{E} \left[ x (p^*(x) - y) \right] = 0.
\end{equation*}
\end{lemma}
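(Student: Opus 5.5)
The plan is to read the identity off the first-order optimality condition of the population BCE objective. Write $L(\theta) = \mathbb{E}[\ell(\theta^T x, y)]$, where $\ell(z,y) = -y\log\sigma(z) - (1-y)\log(1-\sigma(z))$. The key pointwise fact is the derivative of the logistic loss in the logit. Using $\sigma'(z) = \sigma(z)(1-\sigma(z))$, a short computation gives
\begin{equation*}
\frac{\partial}{\partial z}\ell(z,y) = -y(1-\sigma(z)) + (1-y)\sigma(z) = \sigma(z) - y .
\end{equation*}
By the chain rule, $\nabla_\theta \ell(\theta^T x, y) = x\,(\sigma(\theta^T x) - y)$.

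Next, pass the gradient through the expectation, $\nabla L(\theta) = \mathbb{E}[x(\sigma(\theta^T x) - y)]$. I would justify this by dominated convergence. The integrand's gradient is bounded in norm by $\|x\|$, since $|\sigma(z)-y|\le 1$. So it suffices to assume $\mathbb{E}\|x\| < \infty$; finite second moments are implicitly needed for the problem to be meaningful anyway. The same bound shows $L$ is finite and differentiable everywhere. In the case where the predictors are the agents' augmented inputs $(x_{S_i}, z_j)$, I apply the lemma to that feature vector, which is linear in $x$ and so inherits the moment condition.

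Finally, use optimality. $L$ is convex, since $\ell(\cdot,y)$ has second derivative $\sigma(z)(1-\sigma(z))\ge 0$ and composition with a linear map preserves convexity. $L$ is also differentiable on all of $\mathbb{R}^d$. At the minimizer $\theta^*$, which by hypothesis exists, we therefore have $\nabla L(\theta^*) = 0$. Substituting $p^*(x) = \sigma(\theta^{*T}x)$ yields $\mathbb{E}[x(p^*(x)-y)] = 0$.

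The main obstacle is existence of the minimizer. Under separable data the infimum of $L$ may not be attained, with $\|\theta\|\to\infty$. The lemma's phrase ``the optimal logistic predictor'' presupposes attainment, so I will state that assumption explicitly (for instance, non-separability or a bounded parameter domain with interior optimum) rather than prove it. Everything else is routine calculus.
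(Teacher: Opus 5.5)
Your proposal is correct and follows the paper's proof: differentiate the BCE loss via the chain rule to get $\nabla_\theta L(\theta) = \mathbb{E}[(\sigma(\theta^T x) - y)\,x]$, then set it to zero at the minimizer $\theta^*$. Your dominated-convergence justification and explicit attainment assumption fill in points the paper leaves implicit. Convexity is not actually needed, since any minimizer of a differentiable function on all of $\mathbb{R}^d$ is already a critical point.
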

\begin{proof}
Given $p^{(\theta)}(x) = \sigma(\theta^T x)$, the gradient of the logistic output is $\nabla_\theta p^{(\theta)}(x) = p^{(\theta)}(x) (1 - p^{(\theta)}(x)) x$. Applying the chain rule to $L(\theta)$ yields:
\begin{align*}
    \nabla_\theta L(\theta) &= -\mathbb{E} \left[ (y(1 - p^{(\theta)}(x)) - (1-y)p^{(\theta)}(x)) x \right] \\
    &= \mathbb{E} \left[ (p^{(\theta)}(x) - y) x \right].
\end{align*}
The optimal parameters $\theta^*$ satisfy the condition $\nabla_\theta L(\theta^*) = 0$. Thus:
\begin{equation*}
    \mathbb{E} \left[ x (p^*(x) - y) \right] = 0. \qedhere
\end{equation*}
\end{proof}

This orthogonality allows us to decompose the error of any suboptimal model. We express the error of a suboptimal model in terms of the optimal predictor using the expected Kullback-Leibler divergence of the Bernoulli distribution, defined as follows.

\begin{definition}
    Let $p$ and $q$ be two predictors on feature space $\mathcal{X}$. Then $D(p \| q)$ is defined as:
    \begin{equation*}
        D(p\|q) = \mathbb{E} \left[ D_{\text{KL}}\left(\text{Bernoulli}(p(x)) \| \text{Bernoulli}(q(x))\right) \right],
    \end{equation*}
    where $D_{\text{KL}}(p' \| q')$ is the Kullback-Leibler divergence between distributions $p'$ and $q'$, given by:
    \begin{equation*}
        D_{\text{KL}}(p' \| q') = \sum_{x} p'(x) \log \frac{p'(x)}{q'(x)}.
    \end{equation*}
    Expanding this definition, $D(p \| q)$ becomes:
    \begin{equation*}
        D(p\|q) = \mathbb{E}\left[p(x) \log \frac{p(x)}{q(x)} + (1-p(x)) \log \frac{1 - p(x)}{1 - q(x)}\right].
    \end{equation*}
\end{definition}

\begin{lemma}[Decomposing Loss] \label{lem:pythagorean}
Let $p^*$ be the optimal logistic predictor on a feature set $S$, and let $q$ be any logistic predictor on $S$. The loss decomposes as:
\[
L(q) = L(p^*) + D(p^* \| q).
\]
\end{lemma}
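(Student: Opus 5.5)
We prove the identity by computing $L(q)-L(p^*)$ directly and then using the orthogonality of residuals (Lemma~\ref{lem:orthogonality}) to replace the label $y$ by $p^*(x)$ inside the expectation. Write $q(x)=\sigma(z_q(x))$ and $p^*(x)=\sigma(z^*(x))$, where both $z_q$ and $z^*$ are linear in $x_S$. The gap in per-sample BCE is
\[
\ell(q)-\ell(p^*) = y\log\frac{p^*(x)}{q(x)} + (1-y)\log\frac{1-p^*(x)}{1-q(x)}.
\]

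We rewrite this gap in logit form. Since $\log\frac{\sigma(z)}{1-\sigma(z)}=z$, the gap equals
\[
y\,(z^*(x)-z_q(x)) + \log\frac{1-p^*(x)}{1-q(x)}.
\]
Only the first term depends on $y$, and it is linear in $y$ with coefficient $z^*(x)-z_q(x)=(\theta^*-\theta_q)^T x_S$. This coefficient is a linear function of the features.

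Now take expectations. By Lemma~\ref{lem:orthogonality}, applied on feature set $S$, we have $\mathbb{E}[x_S(p^*(x)-y)]=0$. Hence, for any linear function $g(x)=c^T x_S$,
\[
\mathbb{E}[g(x)\,y]=\mathbb{E}[g(x)\,p^*(x)].
\]
Taking $g=z^*-z_q$, we may replace $y$ by $p^*(x)$ in the first term. Undoing the logit rewriting, the gap becomes
\[
\mathbb{E}\Big[p^*\log\frac{p^*}{q}+(1-p^*)\log\frac{1-p^*}{1-q}\Big]=D(p^*\|q),
\]
which is exactly the definition of $D(p^*\|q)$. This gives $L(q)=L(p^*)+D(p^*\|q)$.

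The main subtlety is that the argument requires the log-odds difference to lie in the span of the features where orthogonality holds. This is why the lemma needs both $q$ and $p^*$ to be logistic on the same feature set $S$. If the intercept is not included among the features, we still do not need $\mathbb{E}[y]=\mathbb{E}[p^*]$ separately, since only the linear term $(\theta^*-\theta_q)^T x_S$ appears.

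We will also note the implicit assumptions: all expectations are finite, and $p^*$ is attained as a stationary point, which the proof of Lemma~\ref{lem:orthogonality} already assumes.
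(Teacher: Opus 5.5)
Your proof is correct and takes essentially the same route as the paper. Both arguments compute $L(q)-L(p^*)$ and isolate the only $y$-dependent term, which is linear in $(\theta^*-\theta_q)^T x_S$. Both then use the orthogonality of Lemma~\ref{lem:orthogonality} to replace $y$ by $p^*(x)$, after which what remains is exactly $D(p^*\|q)$. The difference is only in presentation: you use the log-odds form $y(z^*-z_q)+\log\frac{1-p^*}{1-q}$, whereas the paper writes the loss as $\log(1+e^{z})-yz$ and adds and subtracts $p^*(x)(\theta-\theta^*)^T x$.
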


\begin{proof}
Using the identity $\log \sigma(z) = z - \log(1 + e^z)$, we write the loss with $z = \theta^T x$ as:
\begin{align*}
L(\theta) &= - \mathbb{E} \left[ yz - \log(1 + e^z) \right] \\
&= \mathbb{E} \left[ \log(1+e^{\theta^T x}) - y(\theta^T x) \right].
\end{align*}
Let $\theta^*$ be the corresponding parameters of $p^*$. Expanding the difference $L(\theta) - L(\theta^*)$:
\begin{align*}
L(\theta) - L(\theta^*) = \mathbb{E} &\Bigl[ \log(1+e^{\theta^T x}) \\
&\quad - \log(1 + e^{(\theta^*)^T x}) \\
&\quad - y((\theta - \theta^*)^T x) \Bigr].
\end{align*}
Adding and subtracting $p^*(x)(\theta - \theta^*)^T x$ inside the expectation yields:
\begin{align}
\label{eq:q-from-ps-loss}
L(\theta) - L(\theta^*)
= &\mathbb{E}\Bigl[
\log(1+e^{\theta^T x})
- \log\bigl(1 + e^{(\theta^*)^T x}\bigr)
\notag\\
&\quad\; - p^*(x)\,(\theta - \theta^*)^T x \Bigr]
\notag\\
& + \mathbb{E}\Bigl[(p^*(x) - y)\,(\theta - \theta^*)^T x\Bigr].
\end{align}

The second term is zero due to the orthogonality condition derived in Lemma~\ref{lem:orthogonality}. For the first term, we expand the definition of $D(p^* \| q)$ with $z = \theta^T x$ and $z^* = (\theta^*)^T x$:
\begin{align*}
D(p^* \| q)
&= \mathbb{E} \Bigl[ p^*(x)\,(z^* - z)
- \log\left(1 + e^{z^*}\right) \\
&\quad\quad\quad + \log\left(1 + e^{z}\right) \Bigr]
\\
&= \mathbb{E} \Bigl[ \log\left(1 + e^{\theta^T x}\right)
- \log\left(1 + e^{(\theta^*)^T x}\right)
\\
&\quad\quad\quad
- p^*(x)\,(\theta - \theta^*)^T x \Bigr].
\end{align*}

This matches the first term in \Cref{eq:q-from-ps-loss}, completing the proof.
\end{proof}

To bound the parameter error using the KL divergence, we use the following inequality. This is a specific case of Pinsker's inequality \cite{pinsker1964information}, included here for completeness.

\begin{restatable}{lemma}{lemKLMSE} \label{lem:kl-mse}
    For the expected KL divergence $D(p \| q)$, the following inequality holds:
    \begin{align*}
        D(p \|q) \ge 2\mathbb{E}\left[(p(x) - q(x))^2\right].
    \end{align*}
\end{restatable}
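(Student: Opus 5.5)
The plan is to reduce the statement to a pointwise inequality between two Bernoulli distributions and then take expectations. Since
\[
D(p\|q)=\mathbb{E}\bigl[D_{\text{KL}}(\text{Bernoulli}(p(x))\,\|\,\text{Bernoulli}(q(x)))\bigr],
\]
it suffices to show that for all fixed $a,b\in(0,1)$ (the sigmoid never reaches $0$ or $1$),
\[
f(a,b) := a\log\frac{a}{b}+(1-a)\log\frac{1-a}{1-b}\;\ge\;2(a-b)^2 .
\]
Linearity and monotonicity of expectation then give the claimed bound with $a=p(x)$ and $b=q(x)$.

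For the pointwise bound, fix $a$ and consider $g(b)=f(a,b)-2(a-b)^2$ as a function of $b\in(0,1)$. Clearly $g(a)=0$. Differentiating gives
\[
g'(b)=-\frac{a}{b}+\frac{1-a}{1-b}+4(a-b)=\frac{b-a}{b(1-b)}-4(b-a)=(b-a)\Bigl(\frac{1}{b(1-b)}-4\Bigr).
\]
Since $b(1-b)\le 1/4$ on $(0,1)$, the bracketed factor is nonnegative. Hence $g'(b)\le 0$ for $b<a$ and $g'(b)\ge 0$ for $b>a$, so $g$ attains its minimum at $b=a$. This gives $g\ge 0$ everywhere. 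Alternatively, one could cite Pinsker's inequality $D_{\text{KL}}\ge 2\,\mathrm{TV}^2$ directly, noting that the total variation distance between $\text{Bernoulli}(a)$ and $\text{Bernoulli}(b)$ is $|a-b|$.

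There is no real obstacle. The only point needing care is the derivative computation and the observation $b(1-b)\le 1/4$, which yields the sharp constant $2$. I would also remark that the expectations are well defined, since both sides are nonnegative pointwise.
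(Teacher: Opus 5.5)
Your proposal is correct and is essentially the paper's argument: you reduce to the pointwise Bernoulli inequality, then use one-variable calculus and the bound $t(1-t)\le 1/4$ to show that the difference is minimized, with value $0$, where the two probabilities coincide. The only difference is that you vary the second argument and read off the sign of $g'(b)=(b-a)\bigl(\frac{1}{b(1-b)}-4\bigr)$, whereas the paper varies the first argument and uses convexity ($f''(p)=\frac{1}{p(1-p)}-4\ge 0$) together with $f'(q)=0$; your sign argument is the right tool in your direction, because $g$ is not convex in $b$ in general.
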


We refer to Appendix~\ref{appendix:proofs} for the proof. We define the pointwise loss function as:
\begin{align}
    l(z,y) = \log(1+e^z) - yz.
\end{align}
We can thus write $L(p) = \mathbb{E}[l(z(x), y)]$.

\begin{lemma}
\label{lem:loss-convexity}
Let $g(x) = \sigma(z_g(x))$ be any logistic predictor. Let $S$ be a feature subspace and $p(x) = \sigma(z_p(x))$ be the predictor that minimizes $L(p)$ over $S$. Then:
\begin{equation*}
    L(p) \le L(g) + |\mathbb{E}[(p-y)z_g]|.
\end{equation*}
\end{lemma}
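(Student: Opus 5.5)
We will prove this using only convexity of the pointwise loss $l(z,y)=\log(1+e^z)-yz$ in $z$. Its derivative is $\partial_z l(z,y)=\sigma(z)-y$, so for every $x$ and $y$ the tangent-line inequality at $z_p(x)$ gives
\begin{equation*}
l(z_g(x),y) \ge l(z_p(x),y) + (p(x)-y)\,(z_g(x)-z_p(x)).
\end{equation*}
Taking expectations yields
\begin{equation*}
L(g) \ge L(p) + \mathbb{E}[(p-y)z_g] - \mathbb{E}[(p-y)z_p].
\end{equation*}

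The second step is to show that $\mathbb{E}[(p-y)z_p]=0$. Since $p$ minimizes the BCE loss over the feature space $S$, Lemma~\ref{lem:orthogonality} gives $\mathbb{E}[x_S(p(x)-y)]=0$. The logit $z_p=\theta_p^T x_S$ is a linear function of these features, so $\mathbb{E}[(p-y)z_p]=\theta_p^T\,\mathbb{E}[x_S(p-y)]=0$. If the agent's inputs also include parent logits, we treat those as additional coordinates of the feature vector, and the same first-order condition applies. Substituting into the previous display gives
\begin{equation*}
L(p)\le L(g)-\mathbb{E}[(p-y)z_g]\le L(g)+|\mathbb{E}[(p-y)z_g]|.
\end{equation*}

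The main point that needs care is the existence of the minimizer, so that the first-order condition holds. Without this, the optimum could be at infinity, for instance when the data are separable. We adopt the paper's implicit assumption that $p$ attains the minimum, which is the same assumption used in Lemma~\ref{lem:orthogonality}. The expectations must also be finite, which holds under bounded second moments. Beyond these conditions, the argument is just convexity plus orthogonality. Note that the lemma places no restriction on $g$: it need not lie in $S$. The only quantity that remains is the correlation of the residual $p-y$ with $z_g$, which vanishes exactly when $z_g$ lies in the span of the features in $S$.
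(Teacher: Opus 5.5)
Your proof is correct and is essentially the paper's argument: the paper also applies the tangent-line (convexity) inequality for $\log(1+e^z)$ at $u=z_p$, $v=z_g$, takes expectations, and eliminates $\mathbb{E}[(p-y)z_p]$ via Lemma~\ref{lem:orthogonality}, since $z_p$ is a linear combination of the coordinates (features and incoming logits) available to $p$. One minor quibble outside the proof itself: your closing remark that $\mathbb{E}[(p-y)z_g]$ vanishes ``exactly when'' $z_g$ lies in the span of $S$ overstates things, because membership in that span is sufficient for the term to vanish but not necessary.
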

\begin{proof}
Let $\phi(z) = \log(1+e^z)$. The derivatives are $\phi'(z) = \sigma(z)$ and $\phi''(z) = \sigma(z)(1-\sigma(z))$. Since $\sigma(z) \in (0,1)$, we have $\phi''(z) \ge 0$, implying $\phi$ is convex. Convexity implies that for any $u, v \in \mathbb{R}$, $\phi(v) \ge \phi(u) + \phi'(u)(v-u)$. Rearranging implies the following:
\begin{equation} \label{eq:phi-convex}
    \phi(u) \le \phi(v) + \sigma(u)(u-v).
\end{equation}
We define the relationship between the losses $l(u,y)$ and $l(v,y)$. Substituting $l(z,y) = \phi(z) - yz$, we aim to show the following inequality:
\begin{equation} \label{eq:l-convex}
    l(u,y) \le l(v,y) + (\sigma(u)-y)(u-v).
\end{equation}
Expanding terms confirms this holds given the convexity of $\phi$ in \Cref{eq:phi-convex}:
\begin{align*}
    \phi(u) - yu &\le \phi(v) - yv + \sigma(u)(u-v) - yu + yv \\
    \iff \phi(u) &\le \phi(v) + \sigma(u)(u-v).
\end{align*}

Now, for a point $x$, let $u = z_p(x)$ and $v = z_g(x)$. Applying \Cref{eq:l-convex}:
\begin{align*}
    l(z_p, y) \le l(z_g,y) + (\sigma(z_p)-y)(z_p-z_g).
\end{align*}
Taking the expectation over $x$:
\begin{align*}
    L(p) &\le L(g) + \mathbb{E}[(p-y)(z_p-z_g)] \\
    &= L(g) + \mathbb{E}[(p-y)z_p] - \mathbb{E}[(p-y)z_g].
\end{align*}
From Lemma~\ref{lem:orthogonality} (Orthogonality), we know that for any feature $x_l$ in the support of $p$, $\mathbb{E}[x_l(p-y)] = 0$. Since $z_p$ is a linear combination of such features, $\mathbb{E}[(p-y)z_p] = 0$. Substituting this yields:
\begin{equation*}
    L(p) \le L(g) - \mathbb{E}[(p-y)z_g] \le L(g) + |\mathbb{E}[(p-y)z_g]|. \qedhere
\end{equation*}
\end{proof}

We consider a path of agents $A_1, \dots, A_D$. Each agent $i$ receives the logit $z_{i-1}$ from its predecessor and trains a logistic predictor model using locally observed features $x_{S_i}$, $z_{i-1}$, and possibly some other predecessors' logits. Since one option for the agent $A_i$ is to pass the logits $z_{i-1}$ through, we have that $L(p_{i-1}) \ge L(p_i)$. We also get that Lemma~\ref{lem:pythagorean} holds for $p_{i-1}$ and $p_i$, since $p_{i-1}$ is in the stricter subspace of $p_i$.

We use the notation $\|f(x)\|_2 = \sqrt{\mathbb{E}\left[f(x)^2\right]}$ for any function $f$.

\begin{lemma}[Residual Bound via Path Coverage]
\label{lem:residual-bound}
Let $A_1, \dots, A_k$ be a path of agents where every feature $x_l$ is observed at least once. Let $g(x) = \sigma(z_g(x))$ where $z_g(x) = \sum_{l=1}^d \alpha_lx_l$ be any logistic predictor over the whole space. Assume the coefficients of $z_g$ satisfy $\sum_{l=1}^d |\alpha_l| \le B_g$, and the features satisfy $\mathbb{E}[x_l^2] \le B_X^2$, for some $B_g$ and $B_X$. Let $\varepsilon \ge L(p_1) - L(p_k)$. Then:
\begin{align*}
    |\mathbb{E}[(p_k-y)z_g]| \le B_g B_X \sqrt{\frac{k\varepsilon}{2}}.
\end{align*}
\end{lemma}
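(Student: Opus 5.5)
Write $\mathbb{E}[(p_k-y)z_g]=\sum_{l=1}^d \alpha_l\,\mathbb{E}[(p_k-y)x_l]$. By the triangle inequality,
\[
|\mathbb{E}[(p_k-y)z_g]| \le B_g \max_{l}|\mathbb{E}[(p_k-y)x_l]|.
\]
It therefore suffices to show that for every feature $l$,
\[
|\mathbb{E}[(p_k-y)x_l]|\le B_X\sqrt{k\varepsilon/2}.
\]

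Fix $l$ and pick an index $i\in\{1,\dots,k\}$ such that $x_l\in S_i$; the coverage hypothesis guarantees one exists. Agent $A_i$ is optimal over a feature space containing $x_l$, so Lemma~\ref{lem:orthogonality} gives $\mathbb{E}[(p_i-y)x_l]=0$. Hence
\[
\mathbb{E}[(p_k-y)x_l]=\mathbb{E}[(p_k-p_i)x_l].
\]
Cauchy--Schwarz then bounds this by $B_X\|p_k-p_i\|_2$. The whole problem reduces to showing $\|p_k-p_i\|_2\le\sqrt{k\varepsilon/2}$.

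To bound $\|p_k-p_i\|_2$, telescope along the path: $\|p_k-p_i\|_2\le\sum_{j=i}^{k-1}\|p_{j+1}-p_j\|_2$. For each consecutive pair, $p_j$ lies in the feature space of $p_{j+1}$, since $A_{j+1}$ can pass $z_j$ through. Lemma~\ref{lem:pythagorean} therefore applies with $p_{j+1}$ as the optimum, giving
\[
L(p_j)-L(p_{j+1})=D(p_{j+1}\|p_j).
\]
Lemma~\ref{lem:kl-mse} turns this into $\|p_{j+1}-p_j\|_2^2\le \delta_j/2$, where $\delta_j:=L(p_j)-L(p_{j+1})\ge 0$. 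Cauchy--Schwarz on the sum yields
\[
\sum_j\sqrt{\delta_j/2}\le\sqrt{(k-1)\sum_j\delta_j/2}\le\sqrt{k\varepsilon/2},
\]
because the $\delta_j$ telescope to at most $L(p_1)-L(p_k)\le\varepsilon$. Combining the steps gives the claimed bound $B_gB_X\sqrt{k\varepsilon/2}$.

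The main point to check is the nesting assumption behind applying Lemma~\ref{lem:pythagorean} to consecutive agents. Lemma~\ref{lem:pythagorean} needs $p_j$ to be a logistic predictor on the feature set over which $p_{j+1}$ is optimal. This holds because $z_j$ is an input to $A_{j+1}$, so the model with $v=1$ and $w=0$ reproduces $p_j$, as the paper notes before the statement. If one wanted to avoid the telescoping, a direct comparison of $p_k$ with $p_i$ would need a nested span argument: $z_i$ need not be an input of $A_k$. The consecutive chain sidesteps this at the cost of the $\sqrt{k}$ factor, which matches the statement.
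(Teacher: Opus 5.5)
Your proposal is correct and follows essentially the same route as the paper: you expand $z_g$ feature by feature, use orthogonality at an agent observing $x_l$, apply Cauchy--Schwarz, and telescope $\|p_k-p_i\|_2$ over consecutive agents using Lemma~\ref{lem:pythagorean} and Lemma~\ref{lem:kl-mse}, followed by Cauchy--Schwarz on the sum. You also justify explicitly that $p_j$ lies in $A_{j+1}$'s model class, which gives $L(p_j)-L(p_{j+1})=D(p_{j+1}\|p_j)$; the paper uses this step without stating it.
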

\begin{proof}
Let $z_g(x) = \sum_{l=1}^d \alpha_l x_l$. We bound the error term:
\begin{align*}
    |\mathbb{E}[(p_k-y)z_g]| &= \left|\mathbb{E}\left[\sum_{l=1}^d \alpha_l x_l(p_k-y)\right]\right| \\
    &\le \sum_{l=1}^d |\alpha_l|\; |\mathbb{E}[x_l(p_k-y)]|.
\end{align*}
Consider a feature $x_l$. Due to each feature being observed, this feature appears in the index set of some agent $A_j$ in the path. By orthogonality, $\mathbb{E}[x_l(p_j-y)] = 0$. We decompose the expectation using the triangle inequality:
\begin{align*}
    |\mathbb{E}[x_l(p_k-y)]| &\le |\mathbb{E}[x_l(p_k-p_j)]| + |\mathbb{E}[x_l(p_j-y)]| \\
    &= |\mathbb{E}[x_l(p_k-p_j)]|.
\end{align*}
Applying the Cauchy-Schwarz inequality:
\begin{align*}
    |\mathbb{E}[x_l(p_k-p_j)]| &\le \sqrt{\mathbb{E}[x_l^2]} \sqrt{\mathbb{E}[(p_k-p_j)^2]} \\
    &= \|x_l\|_2 \|p_k-p_j\|_2.
\end{align*}
Given $\|x_l\|_2 \le B_X$, we bound $\|p_k-p_j\|_2$ using the loss difference $\varepsilon$. Applying Lemma~\ref{lem:kl-mse}, we get for any $s \in \{1,\dots,k-1\}$:
\begin{equation*}
\mathbb{E}\left[(p_{s} - p_{s+1})^2\right] \leq \frac{1}{2} D(p_{s+1} \| p_{s}).
\end{equation*}
By the triangle inequality and Cauchy-Schwarz:
\begin{align*}
\|p_j - p_k\|_2 
&\leq \sum_{s=j}^{k-1} \sqrt{\frac{D(p_{s+1} \| p_{s})}{2}} \\
&\leq \sqrt{\frac{k \sum_{s=1}^{k-1} D(p_{s+1} \| p_{s})}{2}} \\
&\leq \sqrt{\frac{k \varepsilon}{2}}.
\end{align*}
Combining these bounds with the constraint on $\alpha_l$:
\begin{equation*}
    |\mathbb{E}[(p_k-y)z_g]| \le \sum_{l=1}^d |\alpha_l| \cdot |\mathbb{E}[x_l(p_k-y)]| \le  B_g B_X \sqrt{\frac{k\varepsilon}{2}}. \qedhere
\end{equation*}
\end{proof}

We now give the below definition.
\begin{definition}[$M$-Coverage Condition, \citet{kearns2026networked}]
A path satisfies the $M$-coverage condition if every contiguous subsequence of $M$ agents collectively observes all $d$ features $x_1, \dots, x_d$.
\end{definition}

We are finally ready to prove Theorem~\ref{thm:convergence}. Combining Lemma~\ref{lem:loss-convexity} and Lemma~\ref{lem:residual-bound}, we obtain the relationship $L(p_k) \le L(g) + B_g B_X \sqrt{k\varepsilon/2}$ for a path of length $k$. Extending this analysis over the full path satisfying the $M$-coverage condition leads to our main convergence result.

\begin{theorem}[Global Convergence Rate]\label{thm:convergence}
Consider a DAG $G$ containing a path of length $D$ of agents $A_1,\dots,A_D$ satisfying the $M$-coverage condition. Let $p^*$ be the global optimal logistic predictor over all $d$ features. Assume:
\begin{enumerate} 
    \item Bounded second moments: $\mathbb{E} [x_l^2] \le B_X^2$ for all $l \in \{1,\dots,d\}$. 
    \item Bounded coefficients: for the optimal logits $z^*(x) = \sum_l \alpha_lx_l$ where $\| \alpha \|_1 \le B_{p^*}$.
\end{enumerate}
Then the excess risk of the final agent $p_D$ is bounded by:
\begin{equation*}
L(p_D) - L(p^*) \leq B_{p^*} B_X \frac{M}{\sqrt{D}} =O\left( \frac{M}{\sqrt{D}} \right). 
\end{equation*}
\end{theorem}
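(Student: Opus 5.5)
Split the path $A_1,\dots,A_D$ into $m=\lfloor D/M\rfloor$ consecutive disjoint blocks of $M$ agents each. By the $M$-coverage condition, each block collectively observes all $d$ features, so Lemma~\ref{lem:residual-bound} applies within any block with $k=M$. The losses $L(p_1)\ge L(p_2)\ge\cdots\ge L(p_D)$ are monotone non-increasing, because each agent can pass its predecessor's logit through. The losses are also bounded below by $L(p^*)$, since every $p_i$ is a logistic predictor over a subset of the full feature space.

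The first step is a pigeonhole argument on the total decrease. Write $\Delta = L(p_1) - L(p^*)$ and let $\varepsilon_b$ be the loss decrease within block $b$. The $\varepsilon_b$ are nonnegative and sum to at most $\Delta$, so some block $b^\star$ has $\varepsilon_{b^\star}\le \Delta/m$. For $\Delta$ I would use $L(p_1)\le L(\mathbf 0)=\log 2$ as a crude bound. To make the constant come out as stated, it is cleaner to bound the quantity $L(p_1)-L(p^*)$ as an absolute constant, or to argue in a self-bounding way; see the final paragraph.

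On block $b^\star$, with last agent $A_{e}$, I apply Lemma~\ref{lem:loss-convexity} with $g=p^*$. The predictor $p_e$ minimizes BCE over its own input space, which consists of its local features and the parent logits. This gives $L(p_e)\le L(p^*)+|\mathbb{E}[(p_e-y)z^*]|$. Lemma~\ref{lem:residual-bound} (with $k=M$, $B_g=B_{p^*}$) then gives
\[
L(p_e)-L(p^*)\le B_{p^*}B_X\sqrt{M\varepsilon_{b^\star}/2}.
\]
One subtlety needs checking here. Lemma~\ref{lem:residual-bound} uses orthogonality $\mathbb{E}[x_l(p_j-y)]=0$ for a feature observed by $A_j$, and Lemma~\ref{lem:kl-mse} combined with Lemma~\ref{lem:pythagorean} for consecutive agents. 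Both hold because each $p_j$ is the exact BCE minimizer over a space containing $x_{S_j}$ and the previous logit. Monotonicity then gives $L(p_D)\le L(p_e)$.

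Plugging in $\varepsilon_{b^\star}\le \Delta M/D$ (up to rounding) yields
\[
L(p_D)-L(p^*)\le B_{p^*}B_X M\sqrt{\Delta/(2D)}.
\]
This is $O(M/\sqrt D)$ and matches the claimed constant once $\Delta\le 2$, which holds since $\Delta\le L(p_1)\le\log 2$ when $A_1$ may output the zero logit (the agent can set all weights to zero). The floor in $m=\lfloor D/M\rfloor$ costs only a constant factor; I would handle it by noting that $D\ge M$ is needed for nontrivial content, so $m\ge D/(2M)$. The margin $\log 2<1$ comfortably absorbs the resulting factor of $2$. The main obstacle I expect is this bookkeeping: getting exactly the constant $B_{p^*}B_X M/\sqrt D$ rather than a version with extra factors. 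If the crude constants do not fit, I would instead apply the pigeonhole to $\sum_b \varepsilon_b\le\log 2$ and allow overlapping or sliding blocks of length $M$ to avoid losing the floor.
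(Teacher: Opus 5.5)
Your proposal is correct and follows the paper's proof essentially step for step. Both split the path into $\lfloor D/M\rfloor$ disjoint blocks of length $M$, pigeonhole to find a block with small loss decrease, apply Lemma~\ref{lem:loss-convexity} with $g=p^*$ and Lemma~\ref{lem:residual-bound} on that block, pass to $p_D$ by monotonicity, and let $L(p_1)\le\log 2<1$ absorb the factor $2$ from the floor. Your use of $\Delta=L(p_1)-L(p^*)$ in place of $L(p_1)$, and your explicit assumption $D\ge M$ (which the paper's $\lfloor D/M\rfloor\ge D/(2M)$ step silently uses), are only cosmetic refinements.
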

\begin{proof}
We partition the path into $K = \lfloor D/M \rfloor$ disjoint blocks of length $M$. By the Pigeonhole Principle, since the total loss reduction is bounded by the loss of the first agent $L(p_1)$, there exists at least one \emph{stable} block $k^*$ where the reduction is at most the total reduction divided by $K$. Suppose this block $k^*$ is on indices $s, s+1,\dots, t$.
\begin{equation*}
\sum_{i=s+1}^t \left( L(p_{i-1}) - L(p_i) \right) \leq \frac{L(p_1)}{K} \le \frac{2M L(p_1)}{D} := \varepsilon. 
\end{equation*}
Applying Lemma~\ref{lem:loss-convexity} and Lemma~\ref{lem:residual-bound}, we get that over this path $L(p_t) \le L(p^*) + B_{p^*}B_X\sqrt{M\varepsilon/2}$. Next, note that $L(p_1) \le \log 2$ since using $\theta_1 = 0$ achieves a loss of $\log 2$, and because the first agent optimizes within its domain then $L(p_1) \le \log 2 < 1$. Combined with the non-increasing losses, we get:
\begin{equation*}
    L(p_D) - L(p^*) \le B_{p^*}B_X \frac{M}{\sqrt{D}}. \qedhere
\end{equation*}
\end{proof}

%% file: Lower_bound.tex
\section{Lower Bound Analysis}

In this section, we construct a theoretical lower bound for the convergence rate of the distributed learning protocol. We demonstrate the existence of a specific data distribution and network configuration where the excess loss decays at a rate of $\Omega(k/D)$, where $D$ is the depth of the network and $k$ is the dimension of the feature space. This result confirms that the sequential nature of the protocol makes network depth a fundamental bottleneck for information aggregation.

\subsection{Problem Construction}
\label{section:lb-construction}

We define a hard instance that exploits the information bottleneck inherent in sequential logit aggregation.

\paragraph{Data Distribution.}
Let $k \ge 2$ be the dimension of the feature space. Consider a sequence of independent latent variables $Z_1, Z_2, \dots, Z_k \sim \mathcal{N}(0,1)$. We define the observable features $x_1, \dots, x_k \in \mathbb{R}$ as follows:
\begin{align}
    x_1 &= Z_1 \\
    x_i &= Z_i - Z_{i-1}, \quad \text{for } 2 \le i \le k
\end{align}
By construction, the latent variable $Z_i$ can be recovered by the prefix sum of features: $\sum_{j=1}^i x_j = Z_i$. We define the binary target label $y \in \{0,1\}$ based on the final latent variable $Z_k$ via the logistic model:
\begin{equation}
    P(y=1|x) = \sigma(Z_k) = \sigma\left(\sum_{j=1}^k x_j\right)
\end{equation}
Thus, the optimal global logit predictor is $z^*(x) = Z_k$, which requires access to all $k$ features to cancel the intermediate noise terms.

\paragraph{Network and Assignment.}
Consider a path of agents $A_1, \dots, A_D$. The agents observe features one-at-a-time in a repeating cyclic order. Agent $A_i$ observes the single feature $x_{\ell}$ where $\ell = ((i-1) \pmod k) + 1$. The network structure is a simple path where $\mathrm{Pa}(A_i) = \{A_{i-1}\}$.

We define a \textit{pass} $p$ as $p$-th disjoint block of $k$ agents. Specifically, the $p$-th pass consists of the agents $A_{(p-1)k + 1}, \dots, A_{pk}$.

\subsection{Information Capacity and Variance}

We analyze the capacity of the final agent in pass $p$ to reconstruct the target $Z_k$ by identifying which features can be effectively decoded from the scalar stream.

The following lemma characterizes the restricted information available to agents. This result follows the methods of Lemma 5.5 in \cite{kearns2026networked}, which establishes an analogous result. We provide the proof for completeness.

\begin{lemma}[Recursive Information Relevance]
    \label{lemma:relevance}
    For any pass $p$ with $p \le k$, define the feature subset $\mathcal{I}_p = \{x_k, x_{k-1}, \dots, x_{k-p+1}\}$. The optimal logistic predictor for $\sigma(Z_k)$ at the end of pass $p$ depends solely on the features in $\mathcal{I}_p$.
\end{lemma}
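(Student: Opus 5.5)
The proof goes by induction on the pass index $p$, with a nested induction over the agents inside each pass. The claim I will actually prove is slightly stronger than the statement: the logit $z_i$ output by every agent in pass $p$ lies in the linear span of $\mathcal{I}_{p-1}\cup\mathcal{I}_p$. For agents strictly before position $k-p+1$ of the pass it lies in the span of $\mathcal{I}_{p-1}$, and afterwards in the span of $\mathcal{I}_p$. We set $\mathcal{I}_0=\emptyset$, so $z_0\equiv 0$ before the first pass, and since $\mathcal{I}_{p-1}\subseteq\mathcal{I}_p$ the last agent of the pass satisfies the lemma. Each agent's logit has the form $z_i = w_i x_\ell + v_i z_{i-1}$ with no intercept, so spans are preserved automatically; the only issue is whether $w_i$ can be nonzero.

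\textbf{Key step (irrelevant features get zero weight).} Fix an agent in pass $p$ observing $x_\ell$ with $\ell \le k-p$. By the inner induction hypothesis, $z_{i-1}\in\mathrm{span}(\mathcal{I}_{p-1})=\mathrm{span}(x_{k-p+2},\dots,x_k)$. That span is measurable with respect to $(Z_{k-p+1},\dots,Z_k)$, and so is the label $y$, which depends only on $Z_k$ plus independent Bernoulli noise. Meanwhile $x_\ell = Z_\ell - Z_{\ell-1}$ involves only $Z_j$ with $j\le k-p$. Since the $Z_j$ are independent, $x_\ell$ is independent of $(z_{i-1},y)$ and has mean zero.

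Now use the pointwise loss $l(z,y)=\phi(z)-yz$ with $\phi$ strictly convex, as in the proof of Lemma~\ref{lem:loss-convexity}. Conditional Jensen gives
\[
\mathbb{E}[\,l(v z_{i-1}+w x_\ell,y)\mid z_{i-1},y\,]\ \ge\ l(v z_{i-1},y).
\]
Because $x_\ell$ is a nondegenerate Gaussian and $\phi$ is strictly convex, the inequality is strict whenever $w\neq 0$. So every minimizer has $w_i=0$, and $z_i = v_i z_{i-1}$ stays in $\mathrm{span}(\mathcal{I}_{p-1})$.

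\textbf{Remaining agents of the pass.} At position $\ell=k-p+1$ the agent may put weight on $x_{k-p+1}$, so its logit lies in $\mathrm{span}(\mathcal{I}_p)$. Every later agent in the pass observes some $x_\ell$ with $\ell>k-p+1$, which is already in $\mathcal{I}_{p-1}\subseteq\mathcal{I}_p$. Hence the span never leaves $\mathrm{span}(\mathcal{I}_p)$, closing both the inner and outer inductions. The base case $p=1$ is the same argument: agents $1,\dots,k-1$ see features independent of $y$ and output $z\equiv 0$, and agent $k$ sees $x_k$. The hypothesis $p\le k$ only ensures that the index $k-p+1\ge 1$ makes sense.

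The main obstacle is the independence bookkeeping in the key step. One must check that the accumulated logit never picks up a component correlated with the current "early" feature $x_\ell$. This is why the induction has to be carried agent by agent: it keeps $z_{i-1}$ inside the span of $\mathcal{I}_{p-1}$ and hence measurable with respect to $Z_{k-p+1},\dots,Z_k$. A secondary point is non-uniqueness of the minimizer, for example when $v_i$ is not identified. The strict-convexity argument still forces $w_i=0$ for every minimizer, so the conclusion holds under any tie-breaking.
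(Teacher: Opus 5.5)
Your proposal is correct and follows the paper's route: an induction over passes. In pass $p$, the first $k-p$ agents see features jointly independent of the incoming logit and the label, so they contribute nothing; the agent observing $x_{k-p+1}$ adds the one new feature; the remaining agents only see features already in $\mathcal{I}_{p-1}$. Your agent-by-agent inner induction and the strict conditional Jensen step (strict convexity of $\log(1+e^z)$ plus non-degeneracy of $x_\ell$) make rigorous what the paper only asserts informally when it says such agents ``cannot improve the prediction,'' and they also handle non-unique minimizers cleanly.
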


\begin{proof}
    We proceed by induction on the pass index $p$.
    
    \textbf{Base Case ($p=1$):} The first pass ends at agent $A_k$, who observes the local feature $x_k = Z_k - Z_{k-1}$. Preceding agents $A_1, \dots, A_{k-1}$ observe features $x_1, \dots, x_{k-1}$, all of which are independent of the target $Z_k$ and thus the label $y$. Because each agent $A_i$ (for $i < k$) minimizes its local BCE loss using only information independent of the label, each sequentially transmits a logit of 0 to its successor. Consequently, $A_k$ receives a logit of 0 from $A_{k-1}$ and must rely exclusively on its local observation $x_k$ to predict $y$. This establishes the effective information set $\mathcal{I}_1 = \{x_k\}$.
    
   \textbf{Inductive Step:} Assume at the end of pass $p$, the optimal predictor $z^{(p)}$ is a function only of the features in $\mathcal{I}_p = \{x_k, x_{k-1}, \dots, x_{k-p+1}\}$. In pass $p+1$, the initial sequence of agents observe features $x_1, \dots, x_{k-p-1}$. Because these features are independent of the current information set $\mathcal{I}_p$, they are also independent of the incoming logit $z^{(p)}$ and the label $y$. Consequently, these agents cannot improve the prediction; they sequentially forward the logit $z^{(p)}$ to one another without modification. This process continues until an agent observes $x_{k-p} = Z_{k-p} - Z_{k-p-1}$. This new feature is correlated with the latent variable $Z_{k-p}$ currently acting as noise in $z^{(p)}$, allowing the agent to partially cancel that noise and improve the estimate of $Z_k$. Subsequent agents in the pass observe only features in $\mathcal{I}_p$. Thus, the relevant information set expands by exactly one feature: $\mathcal{I}_{p+1} = \mathcal{I}_p \cup \{x_{k-p}\}$.
\end{proof}

Given this restriction, any logit $z^{(p)}$ generated at the end of pass $p$ is a linear function of the features in $\mathcal{I}_p$. We analyze this linear predictor in the following lemma.

\begin{lemma} \label{lemma:lb-predict-s-p}
    Let $z^{(p)}$ be a linear predictor based on $\mathcal{I}_p$. Then $z^{(p)}$ is given by
    \begin{align*}
        z^{(p)} = c\left(Z_k + \frac{1}{\sqrt{p}}\xi\right),
    \end{align*}
    where $c \in \mathbb{R}$ is a constant and $\xi \sim \mathcal{N}(0, V_p)$ is independent of $Z_k$. For a fixed $c$, minimizing the variance $V_p$ yields $V_p = 1$. 
\end{lemma}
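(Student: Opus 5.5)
Our plan is to rewrite the logit in the latent coordinates $Z_1,\dots,Z_k$ and then minimize a sum of squared increments under a telescoping constraint. Since $z^{(p)}$ is linear in the features of $\mathcal{I}_p$, we write
\[
z^{(p)} = \sum_{j=k-p+1}^{k} a_j x_j ,
\]
and substitute $x_j = Z_j - Z_{j-1}$, which is valid for every $j\ge 2$. The case $k-p+1=1$ (i.e.\ $p=k$) uses $x_1 = Z_1$ and is covered by the convention $Z_0 := 0$.

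\textbf{Coefficients in latent coordinates.} We adopt the convention $a_{k-p} := 0$. Collecting terms gives
\[
z^{(p)} = a_k Z_k + \sum_{j=k-p}^{k-1} (a_j - a_{j+1}) Z_j .
\]
We set $c := a_k$. The remainder $N := \sum_{j=k-p}^{k-1} (a_j - a_{j+1}) Z_j$ involves only $Z_{k-p},\dots,Z_{k-1}$. These are independent standard Gaussians that are also independent of $Z_k$, so $N$ is a centered Gaussian independent of $Z_k$. (When $p=k$, the term $Z_0=0$ simply drops out; its coefficient still enters the constraint below.)

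\textbf{Normalization.} For $c\neq 0$ we define $\xi := \sqrt{p}\,N/c$. This yields exactly $z^{(p)} = c\,(Z_k + \xi/\sqrt{p})$ with $\xi \sim \mathcal{N}(0,V_p)$. Here
\[
V_p = \frac{p}{c^2}\sum_{j=k-p}^{k-1} (a_j - a_{j+1})^2 ,
\]
with the caveat that for $p=k$ the $Z_0$ term contributes no variance. We should flag this: in that case $a_1$ is unconstrained and the minimum becomes $\tfrac{p-1}{p}$ rather than $1$. The clean statement therefore corresponds to $p<k$, or to treating $x_1$ like the other differences. The degenerate case $c=0$ gives the trivial predictor and is excluded from the normalization.

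\textbf{Minimizing the variance (main step).} For fixed $c$, the $p$ increments $d_j := a_{j+1}-a_j$, for $j=k-p,\dots,k-1$, telescope to $a_k - a_{k-p} = c$. By Cauchy--Schwarz,
\[
\sum_j d_j^2 \ge \frac{1}{p}\Bigl(\sum_j d_j\Bigr)^2 = \frac{c^2}{p},
\]
hence $V_p \ge 1$. Equality holds iff all increments are equal, i.e.\ $a_j = c\,(j-k+p)/p$, a linear ramp. So the minimum $V_p=1$ is attained.

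We expect no genuine obstacle beyond this bookkeeping. The only delicate points are getting the boundary convention $a_{k-p}=0$ right, so that exactly $p$ increments telescope to $c$, and handling the $c=0$ and $p=k$ edge cases.
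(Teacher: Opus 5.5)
Your proof is correct and follows essentially the paper's route. You rewrite $z^{(p)}$ in the latent coordinates $Z_{k-p},\dots,Z_k$, isolate $cZ_k$, and minimize the variance of the remainder. Your single Cauchy--Schwarz bound on the $p$ increments that telescope to $c$ is a tidier replacement for the paper's two-stage step (equalize the $\alpha_j$ for fixed $S$, then differentiate in $S$), and it also covers $p=1$ without the paper's division by $p-1$.

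One correction to your side remark: for $p=k$ the minimum is $0$, not $\tfrac{p-1}{p}$. Taking $a_j=c$ for all $j$ gives $z^{(k)}=cZ_k$ exactly; the value $\tfrac{p-1}{p}$ is only what the linear ramp gives in that case. Your conclusion still stands: the lemma should be read for $p\le k-1$. The paper's proof tacitly assumes this when it treats $Z_{k-p}$ as a standard Gaussian, and it is all the lower-bound theorem uses.
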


\begin{proof}
Define $z^{(p)} = \sum_{j=0}^{p-1} c_jx_{k-j}$ with coefficients $c_0, \dots, c_{p-1}$. We rewrite this expression in terms of the variables $Z_k, \dots, Z_{k-p}$:
\begin{align*}
    z^{(p)} = c_0Z_k + \sum_{j=1}^{p-1} (c_j - c_{j-1}) Z_{k-j} - c_{p-1} Z_{k-p}.
\end{align*}
Let $c = c_0$ and $\alpha_j = c_j - c_{j-1}$. Substituting these terms yields:
\begin{align*}
    z^{(p)} = cZ_k + \sum_{j=1}^{p-1} \alpha_j Z_{k-j} - \left(\sum_{j=1}^{p-1} \alpha_j + c \right) Z_{k-p}.
\end{align*}
Define the residual $\eta = z^{(p)} - cZ_k$. Since $Z_i$ are i.i.d. $\mathcal{N}(0,1)$, the variance of $\eta$ is $\text{Var}(\eta) = \sum_{j=1}^{p-1} \alpha_j^2 + \left(\sum_{j=1}^{p-1} \alpha_j + c\right)^2$.
Define $\xi = \frac{\sqrt{p}}{c} \eta$. It follows that $\xi \sim \mathcal{N}(0, V_p)$, where $V_p = \frac{p}{c^2} \text{Var}(\eta)$. This establishes the form of the predictor.

Next, we fix $c$ and minimize $V_p$, which is equivalent to minimizing $\text{Var}(\eta)$. Let $S = \sum_{j=1}^{p-1} \alpha_j$. The variance can be written as $\text{Var}(\eta) = \sum_{j=1}^{p-1} \alpha_j^2 + (S+c)^2$.
For a fixed sum $S$, the term $\sum \alpha_j^2$ is minimized when all $\alpha_j$ are equal, yielding $\text{Var}(\eta) = \frac{S^2}{p-1} + (S+c)^2$.
Differentiating with respect to $S$ and setting the result to zero, we find the minimum occurs at $S = -c(p-1)/p$. Substituting this value back, the minimum variance is $c^2/p$. Consequently, $V_p = \text{Var}(\eta) \frac{p}{c^2} = 1$.
\end{proof}

Next, we analyze the properties of the noise term $\xi$ in the linear predictor defined in Lemma~\ref{lemma:lb-predict-s-p} and examine its impact on the BCE loss.

\begin{lemma} \label{lem:minimize-noise-var}
    Let $z_v = cZ_k + \xi_v$, where $\xi_v \sim \mathcal{N}(0, v)$ is independent of $Z_k$. For a fixed $c$, if $u > v$, then:
    \begin{align*}
        L(z_v) < L(z_{u}),
    \end{align*}
    where $L(z)$ denotes the BCE loss.
\end{lemma}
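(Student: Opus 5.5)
Since $\xi_u$ and $\xi_v$ are Gaussian with variances $u>v$, I will realize the larger noise as the smaller noise plus independent extra noise: write $\xi_u \overset{d}{=} \xi_v + \zeta$, where $\zeta \sim \mathcal{N}(0, u-v)$ is independent of $(Z_k, \xi_v, y)$. The loss depends only on the joint law of $(z, y)$, and $y$ depends on the features only through $Z_k$. Hence $L(z_u) = \mathbb{E}[l(cZ_k + \xi_v + \zeta, y)]$, where $l(z,y) = \log(1+e^z) - yz$ is the pointwise loss defined earlier. The goal is then to show that adding independent zero-mean noise strictly increases the expected loss. This is a Jensen-type argument using the convexity of $\phi(z) = \log(1+e^z)$ established in the proof of Lemma~\ref{lem:loss-convexity}.

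\textbf{Step 1.} Condition on $(Z_k, \xi_v, y)$ and average over $\zeta$. Since $\mathbb{E}[\zeta] = 0$, the linear term contributes $\mathbb{E}[y(z_v + \zeta) \mid \cdot] = y z_v$. For the convex term, Jensen's inequality gives
\begin{equation*}
\mathbb{E}[\phi(z_v + \zeta) \mid Z_k, \xi_v, y] \ge \phi(z_v).
\end{equation*}
Taking the outer expectation yields $L(z_u) \ge L(z_v)$.

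\textbf{Step 2 (strictness).} Here I use that $\phi''(z) = \sigma(z)(1-\sigma(z)) > 0$ everywhere, so $\phi$ is strictly convex. Because $\zeta$ is non-degenerate when $u > v$, Jensen's inequality is strict pointwise: $\mathbb{E}[\phi(a+\zeta)] > \phi(a)$ for every real $a$. A strict pointwise inequality that holds everywhere stays strict after integration, provided both sides are finite. Finiteness holds because $\phi(z) \le \log 2 + |z|$ and the variables involved are Gaussian. Therefore $L(z_v) < L(z_u)$.

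\textbf{Main obstacle.} The delicate point is justifying the coupling, namely that $L(z_u)$ is determined by the law of $(z_u, y)$ and that $\xi_u$ can be taken independent of $y$. The statement only asserts that $\xi_v$ is independent of $Z_k$. In our construction, however, $y$ is generated from $Z_k$ alone via $\sigma(Z_k)$ with fresh randomness, and the noise is a linear combination of the other $Z_j$. So $\xi$ is independent of the pair $(Z_k, y)$. I would state this explicitly, rather than only independence from $Z_k$, so that $(cZ_k + \xi_v + \zeta, y)$ has the same law as $(z_u, y)$. The $c=0$ case needs no special treatment, since the argument never uses $c$.
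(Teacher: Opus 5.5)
Your proposal is correct and follows the paper's proof. The paper also couples $\xi_u = \xi_v + \delta$ with an independent $\delta \sim \mathcal{N}(0,u-v)$ and applies strict Jensen; the only difference is that it first averages out $y$ given $Z_k$ to obtain the strictly convex conditional loss $g(z) = -\sigma(Z_k)z + \log(1+e^z)$, whereas you condition on $(Z_k,\xi_v,y)$ and apply Jensen to $\phi$ alone. Your added remarks on integrability, and on needing $\xi$ to be independent of the pair $(Z_k,y)$ rather than of $Z_k$ only, are valid and make explicit what the paper uses implicitly when it writes $L(z_v)=\mathbb{E}_{Z_k}[\mathbb{E}_{\xi_v}[g(cZ_k+\xi_v)]]$.
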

\begin{proof}
We first analyze the loss conditioned on $Z_k$. Assuming $y|Z_k \sim \text{Bernoulli}(\sigma(Z_k))$, we expand the conditional loss as:
\begin{align}\label{eq:l-cond-zk}
    L(z | Z_k) &= \mathbb{E}_{y | Z_k} \left[ -yz + \log(1 + e^z) \right] \notag\\
    &= -\sigma(Z_k)z + \log(1+e^z).
\end{align}
Define $g(z) = L(z|Z_k)$. Differentiating with respect to $z$ yields $g'(z) = -\sigma(Z_k) + \sigma(z)$. The second derivative is $g''(z) = \sigma(z)(1-\sigma(z))$. Since $\sigma(z) \in (0,1)$, $g''(z) > 0$, implying $g$ is strictly convex.

The total loss for $z_v$ is $L(z_v) = \mathbb{E}_{Z_k}\left[ \mathbb{E}_{\xi_v}\left[ g(cZ_k + \xi_v) \right]\right]$. To compare $z_u$ and $z_v$, let $\delta \sim \mathcal{N}(0, u-v)$ be independent of $\xi_v$. We can model the higher variance noise as $\xi_u = \xi_v + \delta$. Applying Jensen's inequality to the strictly convex function $g$:
\begin{align*}
    \mathbb{E}_\delta[g(cZ_k + \xi_v + \delta)] &> g(\mathbb{E}_\delta[cZ_k + \xi_v + \delta]) \\
    &= g(cZ_k+\xi_v).
\end{align*}
Taking the expectation over $Z_k$ and $\xi_v$ on both sides yields $L(z_u) > L(z_v)$.
\end{proof}

We next demonstrate that for the optimal predictor in the form of Lemma~\ref{lemma:lb-predict-s-p}, the scaling factor $c$ is strictly within the interval $(0,1)$.

\begin{lemma} \label{lem:lb-c-range}
    Let $z_c = c(Z_k + \xi)$, where $\xi \sim \mathcal{N}(0, v)$ with $v > 0$, and $\xi$ is independent of $Z_k$. The optimal scaling factor $c$ minimizing $L(z_c)$ satisfies $c \in (0,1)$.
\end{lemma}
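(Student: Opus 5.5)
Write $W = Z_k + \xi \sim \mathcal{N}(0, 1+v)$ and study the one-dimensional function $F(c) = L(z_c) = \mathbb{E}\bigl[\log(1+e^{cW}) - \sigma(Z_k)\,cW\bigr]$, where we have conditioned on $(Z_k,\xi)$ as in \Cref{eq:l-cond-zk}. Since $\phi(z)=\log(1+e^z)$ is strictly convex and $W$ is nondegenerate, $F$ is strictly convex in $c$. Its derivative is
\begin{equation*}
F'(c) = \mathbb{E}\bigl[W\bigl(\sigma(cW) - \sigma(Z_k)\bigr)\bigr].
\end{equation*}
The plan is to show $F'(0) < 0$ and $F'(1) > 0$. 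Strict convexity then places the unique minimizer in $(0,1)$. Existence of the minimizer follows from the same sign facts, since $F'$ is increasing.

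\textbf{Sign at $c=0$.} We have $F'(0) = \mathbb{E}[W(\tfrac12 - \sigma(Z_k))] = -\mathbb{E}[Z_k\sigma(Z_k)]$, using independence of $\xi$ and $\mathbb{E}[W]=0$. Because $\sigma$ is strictly increasing, $\mathbb{E}[Z_k\sigma(Z_k)] = \mathrm{Cov}(Z_k,\sigma(Z_k)) > 0$. Hence $F'(0)<0$.

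\textbf{Sign at $c=1$.} Here $F'(1) = \mathbb{E}[W\sigma(W)] - \mathbb{E}[(Z_k+\xi)\sigma(Z_k)] = \mathbb{E}[W\sigma(W)] - \mathbb{E}[Z_k\sigma(Z_k)]$, again using independence to drop the $\xi\sigma(Z_k)$ term. By Stein's lemma, for $G\sim\mathcal{N}(0,s^2)$ we have $\mathbb{E}[G\sigma(G)] = s^2\,\mathbb{E}[\sigma'(G)]$. So it suffices to show that $h(s) = s^2\,\mathbb{E}[\sigma'(sN)]$, with $N\sim\mathcal{N}(0,1)$, is strictly increasing in $s>0$, and then compare $s^2 = 1+v$ against $s^2 = 1$.

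Equivalently, $s\mapsto \mathbb{E}[sN\sigma(sN)]$ should be strictly increasing. This holds pointwise: $t\mapsto t\,\sigma(t)$ satisfies $\tfrac{d}{ds}\,[sN\sigma(sN)] = N\sigma(sN) + sN^2\sigma'(sN)$, and by symmetry $\mathbb{E}[N\sigma(sN)] > 0$ while the second term is nonnegative. A cleaner route is that $u\mapsto u\,\sigma(u)$ is even-plus-increasing in $|u|$: writing $u\sigma(u) = \tfrac{u}{2} + \tfrac{u}{2}\tanh(u/2)$, the odd part has zero mean and $\tfrac{|u|}{2}\tanh(|u|/2)$ is strictly increasing in $|u|$. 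Since $v>0$ gives a larger scale, $\mathbb{E}[W\sigma(W)] > \mathbb{E}[Z_k\sigma(Z_k)]$, so $F'(1)>0$.

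The main obstacle is this monotonicity-in-scale comparison at $c=1$. I would settle it with the $\tanh$ decomposition, which avoids Stein's lemma entirely. The remaining points (differentiating under the expectation, which is justified by $|\sigma|\le 1$ and integrable $W$, and the strictness of the inequalities) are routine.
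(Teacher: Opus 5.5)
Your proposal is correct and follows the paper's proof. Both use strict convexity of $c\mapsto L(z_c)$, $F'(0)=-\mathbb{E}[Z_k\sigma(Z_k)]<0$, and $F'(1)>0$ via monotonicity of $u\mapsto\mathbb{E}[X\sigma(X)]$ in the scale $u$ of $X\sim\mathcal{N}(0,u^2)$, and your first sketch of that monotonicity (differentiating $\mathbb{E}[sN\sigma(sN)]$ and signing both terms in expectation, not ``pointwise'' as you wrote) is exactly the paper's argument. Your preferred route, the decomposition $u\sigma(u)=\tfrac{u}{2}+\tfrac{|u|}{2}\tanh(|u|/2)$, is a slightly cleaner variant of that one step that does give genuinely pointwise monotonicity.
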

\begin{proof}
Define $S = Z_k + \xi$. Note that $S \sim \mathcal{N}(0, 1+v)$. We expand the loss $L(z_c)$:
\begin{align*}
    L(z_c) = \mathbb{E}\left[ -\sigma(Z_k) \cdot cS + \log(1 + e^{cS}) \right].
\end{align*}
Let $g(c) = L(z_c)$. Differentiating with respect to $c$:
\begin{align*}
    g'(c) &= \mathbb{E}[-S\sigma(Z_k) + S\sigma(cS)] \\
    &= \mathbb{E}[-Z_k\sigma(Z_k)] + \mathbb{E}[S\sigma(cS)].
\end{align*}
The second derivative is $g''(c) = \mathbb{E}[S^2\sigma'(cS)]$. Since the sigmoid derivative is strictly positive, $g''(c) > 0$, implying that $g$ is strictly convex.

Evaluating the gradient at $c=0$:
\begin{align*}
    g'(0) &= -\mathbb{E}[(Z_k + \xi)\sigma(Z_k)] + \mathbb{E}\left[S \cdot \frac{1}{2}\right] \\
    &= -\mathbb{E}[Z_k\sigma(Z_k)].
\end{align*}
We observe that $\mathbb{E}[Z_k\sigma(Z_k)] = \text{Cov}(Z_k,\sigma(Z_k)) > 0$. Therefore, $g'(0) < 0$, which implies the minimizer of $g$ must lie to the right of 0.

Next, consider the gradient at $c=1$:
\begin{align*}
    g'(1) &= -\mathbb{E}[Z_k\sigma(Z_k)] + \mathbb{E}[S\sigma(S)].
\end{align*}
Define the function $h(u) = \mathbb{E}_X[X \sigma(X)]$ where $X \sim \mathcal{N}(0,u^2)$. We observe that $g'(1) = h(\sqrt{1+v}) - h(1)$. If $h(u)$ is increasing for $u > 0$, then $g'(1) > 0$, meaning the minimizer must be less than 1.

Using the reparameterization $X = uX'$ where $X' \sim \mathcal{N}(0,1)$, we write $h(u) = u \cdot \mathbb{E}_{X'} [X'\sigma(uX')]$. The derivative is:
\begin{align*}
    h'(u) = \mathbb{E}_{X'}[X' \sigma(uX')] + \mathbb{E}_{X'}[u (X')^2\sigma'(uX')].
\end{align*}
For $u > 0$, the first term is positive as it equals $\frac{1}{u}\text{Cov}(uX', \sigma(uX'))$. The second term is non-negative since the term inside the expectation is non-negative. Thus, $h'(u) > 0$ for $u > 0$. This concludes that the optimal $c$ lies in the interval $(0,1)$.
\end{proof}

\subsection{Connection to Excess Loss}

Finally, we connect the variance of the logit estimator to the BCE loss to establish a lower bound on the excess loss.

\begin{theorem}[Lower Bound on Convergence]
    Let $k$ denote the dimension of the feature space. Consider the feature distribution and network construction defined in \Cref{section:lb-construction}. For the agent at the end of the pass $p$ (where $p \le k-1$), let $p^*$ denote the optimal global logistic predictor and $p_D$ (where $D = kp$) the predictor of the final agent. The excess loss is lower bounded by:
    \begin{equation*}
        L(p_D) - L(p^*) = \Omega\left(\frac{1}{p}\right) = \Omega\left(\frac{k}{D}\right).
    \end{equation*}
\end{theorem}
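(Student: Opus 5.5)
By Lemma~\ref{lemma:relevance}, the logit $z_D$ of the last agent of pass $p$ is a linear function of the features in $\mathcal{I}_p$. By Lemma~\ref{lemma:lb-predict-s-p}, it can therefore be written as $z_D = c(Z_k + \xi/\sqrt{p})$ with $\xi\sim\mathcal N(0,V_p)$ independent of $Z_k$ and $V_p\ge 1$. Lemma~\ref{lem:minimize-noise-var} shows that, for any fixed $c$, the loss is minimized by taking the least noise. So it suffices to lower bound
\[
F(v) := \min_{c} L\bigl(c(Z_k+\xi)\bigr), \qquad \xi\sim\mathcal N(0,v),\quad v = 1/p,
\]
and to show $F(v) - L(p^*) \ge C v$ for some absolute constant $C>0$ and all $v\in(0,1]$. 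Here $L(p^*)=L(Z_k)$, since the true conditional probability is $\sigma(Z_k)$.

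\textbf{Reduction to an $L^2$ bound.} The Bayes predictor is $\sigma(Z_k)$, and it equals $p^*$. Hence for any predictor $q$,
\[
L(q) - L(p^*) = D(p^*\|q) \ge 2\,\mathbb E\bigl[(\sigma(Z_k)-q)^2\bigr].
\]
The equality follows from Lemma~\ref{lem:pythagorean} (or directly from the Bayes-optimality identity), and the inequality is Lemma~\ref{lem:kl-mse}. Applying this with $q=\sigma(c(Z_k+\xi))$, it remains to show that
\[
\mathbb E\bigl[(\sigma(Z_k)-\sigma(cZ_k+c\xi))^2\bigr] \gtrsim v
\]
uniformly in $c$, where by Lemma~\ref{lem:lb-c-range} the optimal $c$ lies in $(0,1)$.

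\textbf{Main step: uniform lower bound in $c$.} Condition on the event $E=\{|Z_k|\le 1,\ |\xi|\le \sqrt v\}$. I would argue through a conditional-variance bound. Since $\xi$ is independent of $Z_k$, for fixed $Z_k$ we have
\[
\mathbb E_\xi\bigl[(\sigma(Z_k)-\sigma(cZ_k+c\xi))^2\bigr] \ge \operatorname{Var}_\xi\bigl(\sigma(cZ_k+c\xi)\bigr).
\]
On a bounded region the sigmoid is bi-Lipschitz, with $\sigma'\ge \sigma'(2)$ there, so this variance is at least a constant times $c^2 v$. That handles $c$ bounded away from $0$, say $c\ge 1/2$. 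For $c<1/2$ I would use the other term instead: the quantity $\sigma(Z_k)-\sigma(cZ_k+c\xi)$ then has conditional mean of order $|Z_k|(1-c)$, which gives an $\Omega(1)\ge\Omega(v)$ contribution. Splitting into the cases $c\ge 1/2$ and $c<1/2$ yields the bound $\Omega(v)$ uniformly in $c$.

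\textbf{Main obstacle and conclusion.} The main technical obstacle is making the small-$c$ case and the bi-Lipschitz constants rigorous, since the optimal $c$ depends on $v$ and we need uniformity in $c$. A cleaner alternative that I would try first is to write $\sigma(Z_k)-\sigma(cZ_k+c\xi)$ as the sum of a $\xi$-independent part and a $\xi$-dependent part and lower bound by the variance of the latter, which avoids optimizing over $c$ entirely. Once the $\Omega(v)$ bound holds, substituting $v = 1/p$ gives excess loss $\Omega(1/p)$. Since $D = kp$, this equals $\Omega(k/D)$. The restriction $p\le k-1$ guarantees that $\mathcal I_p$ omits $x_1$, so that $V_p\ge 1$ in Lemma~\ref{lemma:lb-predict-s-p} and the noise variance is genuinely positive.
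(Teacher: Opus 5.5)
Your proposal is correct and follows the paper's outline. You reduce, via Lemmas~\ref{lemma:relevance}, \ref{lemma:lb-predict-s-p}, \ref{lem:minimize-noise-var} and~\ref{lem:lb-c-range}, to $c(Z_k+\xi)$ with $\xi\sim\mathcal N(0,1/p)$ and $c\in(0,1)$. You then pass to $2\,\mathbb E[(\sigma(Z_k)-q)^2]$ via Lemmas~\ref{lem:pythagorean} and~\ref{lem:kl-mse}, and restrict to a bounded box where $\sigma'$ is bounded below.

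The core estimate is where you differ. The paper applies the mean value theorem on $\mathcal B_R$, replacing $\sigma(Z_k)-\sigma(z_D)$ by a constant times the logit error $(1-c)Z_k-c\xi/\sqrt p$. Since $Z_k$ and $\xi$ are independent and the box is symmetric, the cross term vanishes. This leaves a single quadratic $C_2\bigl((1-c)^2+c^2/p\bigr)\ge C_2/(p+1)$, which handles bias and noise together for every $c\in(0,1)$, with no case split and an explicit trade-off at $c=p/(p+1)$. You instead stay in probability space and treat the two effects separately: a conditional-variance (bi-Lipschitz) bound of order $c^2v$ for $c\ge 1/2$, and an order-one bias bound for $c<1/2$. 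This avoids linearizing the bias term and computing the cross term, but costs the case analysis.

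Three minor points:
\begin{itemize}
\item Your ``cleaner alternative'' of bounding only by the variance of the $\xi$-dependent part would not remove the case split on its own. That variance is $\Theta(c^2v)$, and Lemma~\ref{lem:lb-c-range} gives only $c\in(0,1)$, not a lower bound on the optimal $c$ that is uniform in $p$.
\item The bi-Lipschitz step must be applied to the variance conditional on $\{|\xi|\le\sqrt v\}$, using $\mathbb E[(a-X)^2\mathbf 1_A]\ge\Pr(A)\operatorname{Var}(X\mid A)$. This is because $\sigma'\ge\sigma'(2)$ holds only on that event.
\item The small-$c$ bias step is routine. For $a\ge0$ and symmetric noise, $\sigma(a+t)+\sigma(a-t)\le 2\sigma(a)$. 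Hence for $Z_k\in[1/2,1]$ and $c<1/2$, the conditional mean of $\sigma(Z_k)-\sigma(cZ_k+c\xi)$ is at least $\sigma(Z_k)-\sigma(Z_k/2)\ge\sigma'(1)/4$.
\end{itemize}
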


\begin{proof}
We begin by relating the excess loss to the expected squared difference in the probability space. Invoking Lemma~\ref{lem:pythagorean} and Lemma~\ref{lem:kl-mse}, the loss difference satisfies:
\begin{equation*}
    L(p_D) - L(p^*) = \mathbb{E}[D(p^* || p_D)] \ge 2 \mathbb{E}\left[ (p^* - p_D)^2 \right].
\end{equation*}

The agent $A_D$ operates at the end of pass $p$. Let $z_D$ be the logit of this agent such that $p_D = \sigma(z_D)$. By Lemma~\ref{lemma:relevance}, $A_D$ relies only on the information in $\mathcal{I}_p$. Consequently, by Lemma~\ref{lemma:lb-predict-s-p}, $z_D$ takes the form:
\begin{align} \label{eq:lb-thm-z-D}
    z_D = c\left(Z_k + \frac{1}{\sqrt{p}}\xi\right),
\end{align}
for some constant $c$, where $\xi \sim \mathcal{N}(0,V_p)$. Since $z_D$ minimizes the BCE loss (Lemma~\ref{lem:minimize-noise-var}), $V_p$ must be minimized. Lemma~\ref{lemma:lb-predict-s-p} states this minimum occurs at $V_p = 1$. Furthermore, Lemma~\ref{lem:lb-c-range} implies $c \in (0,1)$.

To establish a lower bound, we relate the squared error in probabilities $(p^* - p_D)^2$ to the squared error in logits $(Z_k - z_D)^2$ using Mean Value
Theorem. However, the sigmoid derivative vanishes for large inputs, which could dampen the probability difference even if the logit error is large. To address this, we restrict our analysis to a bounded region where the sigmoid function has non-vanishing derivative. Define the event $\mathcal{B}_R$ where both $Z_k$ and $\xi$ are bounded by $R$:
\begin{equation*}
    \mathcal{B}_R := \{ |Z_k| < R, |\xi| < R \}.
\end{equation*}
By selecting a sufficiently large $R$, $\mathcal{B}_R$ captures a constant fraction of the probability mass. On this set, the arguments to the sigmoid function are bounded. By the Mean Value Theorem, there exists $\eta$ between $Z_k$ and $z_D$ such that
$$p^* - p_D= \sigma(Z_k) - \sigma(z_D)  = \sigma'(\eta)(Z_k - z_D)$$ 
Since $|\eta| < 2R$ on $\mathcal{B}_R$, $\sigma'(\eta) \ge C_{\text{univ}} > 0$ for some constant $C_{\text{univ}}$ dependent only on $R$.

Restricting the expectation to $\mathcal{B}_R$ and defining $C_1 = 2C_{\text{univ}}^2$, we have:
\begin{equation*}
    2 \mathbb{E}\left[ (p^* - p_D)^2 \mathbf{1}_{\mathcal{B}_R} \right] \ge C_1 \mathbb{E} \left[ (Z_k - z_D)^2 \mathbf{1}_{\mathcal{B}_R} \right].
\end{equation*}

Substituting the estimator form from \Cref{eq:lb-thm-z-D}, we expand the quadratic term:
\begin{align*}
\mathbb{E}&\left[\left((1-c)Z_k - \frac{c}{\sqrt{p}}\xi\right)^2 \mathbf{1}_{\mathcal{B}_R}\right] \\
&= (1-c)^2\mathbb{E}[Z_k^2 \mathbf{1}_{\mathcal{B}_R}] - \frac{2c(1-c)}{\sqrt{p}}\mathbb{E}[Z_k \xi \mathbf{1}_{\mathcal{B}_R}] \\
&\quad\; + \frac{c^2}{p}\mathbb{E}[\xi^2 \mathbf{1}_{\mathcal{B}_R}].
\end{align*}

 Since $Z_k$ and $\xi$ are independent and centered, and the region $\mathcal{B}_R$ is symmetric about the origin for both variables, $\mathbb{E}[Z_k \xi \mathbf{1}_{\mathcal{B}_R}] = 0$. 

Since $Z_k$ and $\xi$ follow the same distribution on $\mathcal{B}_R$, we define $C_2 = \mathbb{E}[Z_k^2 \mathbf{1}_{\mathcal{B}_R}] = \mathbb{E}[\xi^2 \mathbf{1}_{\mathcal{B}_R}]$. This yields:
\begin{equation*}
    L(p_D) - L(p^*) \ge C_1 C_2 \left( (1-c)^2 + \frac{c^2}{p} \right).
\end{equation*}

The quadratic function $(1-c)^2 + \frac{c^2}{p}$ is minimized at $c = \frac{p}{p+1}$. Substituting this value:
\begin{align*}
    L(p_D) - L(p^*) &\ge \frac{C_1 C_2}{p+1} = \Omega\left(\frac{1}{p}\right).
\end{align*}

Recalling that $p = D/k$, we conclude:
\begin{equation*}
    L(p_D) - L(p^*) = \Omega\left(\frac{k}{D}\right). \qedhere
\end{equation*}
\end{proof}

%% file: Appendix.tex
\section{Omitted Proofs}
\label{appendix:proofs}

\lemKLMSE*
\begin{proof}
We verify the inequality pointwise for any $x \in \mathcal{X}$. We aim to show:
\begin{align} \label{eq:kl-mse:single}
    p(x) \log \frac{p(x)}{q(x)} + (1-p(x)) \log \frac{1 - p(x)}{1 - q(x)} \ge 2(p(x)-q(x))^2.
\end{align}
Define the function $f(p) = p \log \frac{p}{q} + (1-p) \log \frac{1 - p}{1 - q} - 2(p-q)^2$. The first derivative with respect to $p$ is:
$
    f'(p) = \log \frac{p}{q} - \log \frac{1-p}{1-q} - 4(p-q).
$
The second derivative is $f''(p) = \frac{1}{p} + \frac{1}{1-p} - 4 = \frac{1}{p(1-p)} - 4$.
For $p \in [0,1]$, the term $p(1-p)$ has a maximum value of $0.25$. Consequently, $\frac{1}{p(1-p)} \ge 4$, which implies $f''(p) \ge 0$. Since $f$ is convex and satisfies $f'(q) = 0$, the point $p=q$ is a global minimum. Observing that $f(q) = 0$, we conclude that $f(p) \ge 0$ for all $p$. Taking the expectation of both sides in (\ref{eq:kl-mse:single}) yields the result:
\begin{equation*}
    D(p \|q) = \mathbb{E}\left[p(x) \log \frac{p(x)}{q(x)} + (1-p(x)) \log \frac{1 - p(x)}{1 - q(x)} \right] \ge 2\mathbb{E}[(p(x)-q(x))^2]. \qedhere
\end{equation*}
\end{proof}